\theoremstyle{thmstyleone}%
\newtheorem{theorem}{Theorem}
\theoremstyle{thmstyletwo}%
\theoremstyle{thmstylethree}%
\newtheorem{definition}{Definition}%
\begin{document}

\title[Article Title]{HetFS: A Method for Fast Similarity Search with Ad-hoc Meta-paths on Heterogeneous Information Networks}


\author[1,3]{\fnm{Xuqi} \sur{Mao}}\email{xqmao17@fudan.edu.cn}

\author[2,3]{\fnm{Zhenyi} \sur{Chen}}\email{zhenyichen20@fudan.edu.cn}

\author[1, 3]{\fnm{Zhenying} \sur{He}}\email{zhenying@fudan.edu.cn}

\author[1, 3]{\fnm{Yinan} \sur{Jing}}\email{jingyn@fudan.edu.cn}

\author[1, 3]{\fnm{Kai} \sur{Zhang}}\email{zhangk@fudan.edu.cn}

\author*[1,2,3]{\fnm{X. Sean} \sur{Wang}}\email{xywangCS@fudan.edu.cn}

\affil*[1]{\orgdiv{School of Computer Science}, \orgname{Fudan University}, \orgaddress{\state{Shanghai}, \country{China}}}

\affil[2]{\orgdiv{Software School}, \orgname{Fudan University},  \orgaddress{\state{Shanghai}, \country{China}}}

\affil[3]{\orgname{Shanghai Key Laboratory of Data Science},  \orgaddress{\state{Shanghai}, \country{China}}}


\abstract{Numerous real-world information networks form \textbf{H}eterogeneous \textbf{I}nformation \textbf{N}etworks (HINs) with diverse objects and relations represented as nodes and edges in heterogeneous graphs. Similarity between nodes quantifies how closely two nodes resemble each other, mainly depending on the similarity of the nodes they are connected to, recursively. Users may be interested in only specific types of connections in the similarity definition, represented as meta-paths, i.e., a sequence of node and edge types. Existing \textbf{H}eterogeneous \textbf{G}raph \textbf{N}eural \textbf{N}etwork (HGNN)-based similarity search methods may accommodate meta-paths, but require retraining for different meta-paths. Conversely, existing path-based similarity search methods may switch flexibly between meta-paths but often suffer from lower accuracy, as they rely solely on path information. This paper proposes HetFS, a \textbf{F}ast \textbf{S}imilarity method for ad-hoc queries with user-given meta-paths on \textbf{Het}erogeneous information networks. HetFS provides similarity results based on path information that satisfies the meta-path restriction, as well as node content. Extensive experiments demonstrate the effectiveness and efficiency of HetFS in addressing ad-hoc queries, outperforming state-of-the-art HGNNs and path-based approaches, and showing strong performance in downstream applications, including link prediction, node classification, and clustering.
}

\keywords{heterogeneous information network, similarity search method, user-given meta-path, ad-hoc query}



\maketitle

\section{Introduction}\label{sec1}

Similarity search of nodes on information networks serves as the foundation for numerous data analytics techniques  \cite{zhang2020continuously, echihabi2022hercules, mccauley2018set, wang2021deep, willkomm2019efficient} and has wide-ranging applications, including online advertising \cite{dey2020p}, recommendation systems \cite{zhang2020graph}, biomedical analysis \cite{selvitopi2023extreme, pavlopoulos2018bipartite, yi2022graph}, spatial-temporal systems \cite{patroumpas2020similarity, athanasiou2019big}. Most real-world information networks are heterogeneous information networks (HINs) \cite{zheng2022semantic, liao2022contrastive}, characterized by the coexistence of edges connecting nodes of various types (structural information) and properties associated with each node (often in terms of unstructured information). These connections illustrate the intricate semantics inherent in networks. We use a movie information network, illustrated in Fig.~\ref{fig:illustrationExampleOfHIN}(a), as a running example. The network consists of actors, movies, and directors, each interconnected by different types of relations.

\begin{figure}[h]
\centering\includegraphics[width=0.7\textwidth]{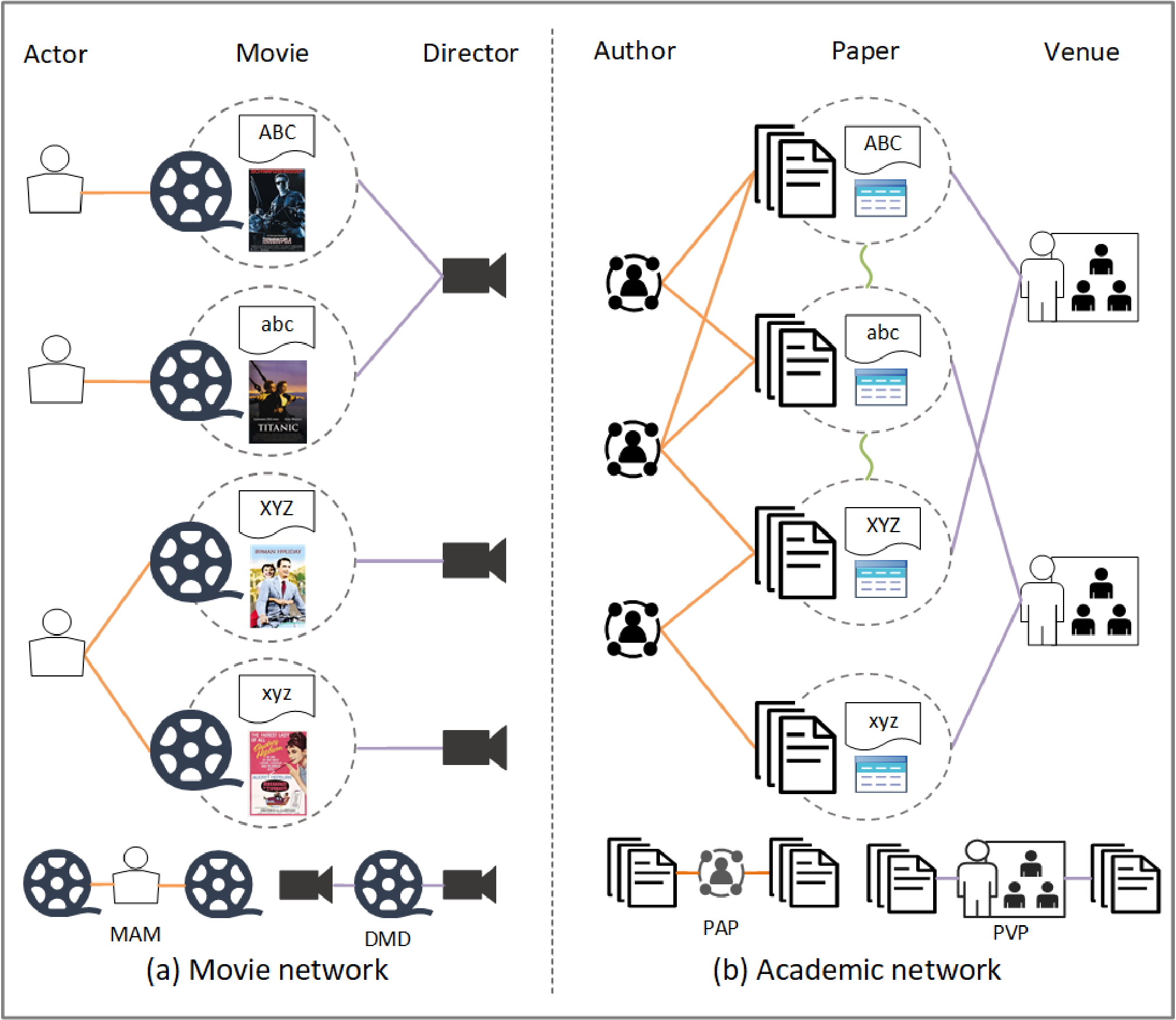}
\caption{Examples of HINs. (a) depicts a movie HIN comprising objects of different types such as actors (A), movies (M), and directors (D), each carrying varied properties including actor ID, actor name, author order, movie ID, movie name, director ID, and director name. These objects are interconnected through different types of relations, such as acting, acted-by, directed-by, and directing. ``MAM" (Movie-Actor-Movie) and ``DMD" (Director-Movie-Director) are two meta-path examples of it. (b) showcases objects and relations within an academic network, with ``PAP" (Paper-Author-Paper) and ``PVP" (Paper-Venue-Paper) as two example meta-paths.} \label{fig:illustrationExampleOfHIN}
\end{figure} 


The similarity between two nodes is generally defined recursively, considering both the nodes themselves and those to which they are connected. When the types of connection are restricted to specific ones at each step of the recursion, the similarity is said to be under a specific meta-path. Meta-paths effectively represent the patterns of connection between two nodes, encapsulating the structural and attribute-based relationships within the network. For example, meta-paths such as ``MAM" (Movie-Actor-Movie) and ``MDM" (Movie-Director-Movie) in Fig.~\ref{fig:illustrationExampleOfHIN}(a) illustrate movies linked either by shared actors or directors. Given their specificity, users often focus on node similarities that align with particular meta-paths, necessitating the ability to handle queries with ad-hoc meta-paths. These ad-hoc meta-paths, tailored to specific querying needs, are not predefined but are instead designed to capture user search intentions more precisely, thereby meeting user expectations more effectively. Accurately aligning query handling with user intent not only enhances user satisfaction but also increases the practical utility of the network.

For instance, when looking into the movie information network in our running example, we found that the movies most similar to ``Terminator 2: Judgment Day" (abbreviated as ``Terminator 2" hereinafter) exhibit considerable variation under different meta-paths. The first column in Table~\ref{tab:top5movies_terminator2} lists movies like ``Terminator," sharing similar lead actors with ``Terminator 2," whereas the second column includes films like ``True Lies," directed by similar directors with ``Terminator 2". A special case in similarity search is where all meta-paths are considered without restriction to provide a comprehensive view of similarity, named meta-path-free similarity. The last column of Table~\ref{tab:result_hetfs} gives the movies that are most similar to ``Terminator 2" under meta-path-free semantics in the movie information network of our running example. These examples underscore the importance of discerning user intent, as different meta-paths can lead to vastly different search results, highlighting the critical role of precise meta-path selection in achieving relevant outcomes. 

\begin{table}[htbp]
\centering
\caption{\centering Top 5 similar movies of ``Terminator 2" under two meta-paths and a meta-path-free scenarios using HetFS.} \label{tab:top5movies_terminator2}
\begin{tabular}{clll}
\toprule
Rank  & meta-path: MAM   & meta-path: MDM   & meta-path-free \\
\midrule
1     & Terminator & True Lies & Terminator \\
2     & True Lies & Aliens & Terminator 3 \\
3     & Total Recall & The Abyss & True Lies \\
4     & Jingle All the Way & Titanic & Titanic \\
5     & End of Days & Ghosts of the Abyss & The Abyss \\
\bottomrule
\end{tabular}%
\label{tab:result_hetfs}%
\end{table}%


The above examples highlight the need for approaches that can effectively adapt to ad-hoc meta-paths, catering specifically to distinct user preferences. Despite advancements in visual preference learning \cite{wen2024unified}, contrastive learning \cite{zheng2023exploiting}, and attention networks \cite{pan2023iui, chen2023disentangling, wang2023intention} that have improved item recommendations and search accuracy by better aligning with user intentions \cite{rasch2023going, xu2023int}, there is a notable gap in research that focuses on leveraging ad-hoc meta-path configurations to enhance recommendation systems further.

Existing methods face limitations in dealing with user-specified meta-paths, either relying on pre-defined meta-paths or operating in a meta-path-free manner. GNN-based methods \cite{wang2019heterogeneous, hu2020heterogeneous, zhang2019heterogeneous} designed for HIN similarity search are often meta-path-free, leading to search results that may not align with user intentions. On the other hand, pre-defined meta-path methods \cite{fu2020magnn} fail to deliver accurate results if they don't match the user-given meta-paths. Training node embeddings online for each query is impractical due to time constraints.

Path-based similarity \cite{sun2011pathsim, wang2020howsim} search methods primarily use graph topology, resulting in lower accuracy compared to GNN-based methods. To enhance accuracy, we propose to calculate node similarity by integrating complex content information, intrinsic node characteristics, unique edge properties, and overall graph structure. For instance, rich textual content and diverse image data can capture more insights from HINs, enhancing similarity assessments. Different roles of an author (first author vs. last author) or an actor (lead vs. supporting roles) can influence similarity. Many existing path-based works overlook node centrality \cite{brin1998pagerank}, treating all nodes equally and assigning them equal weights. Centrality (a newcomer vs. a seasoned expert), which signifies node importance, significantly influences node similarity. Additionally, combining node properties with edge properties is crucial, as paths with similar nodes can differ if edge properties vary.

Considering the above limitations in the current approaches, this paper introduces HetFS, a Fast Similarity method with ad-hoc meta-paths on Heterogeneous information networks. HetFS integrates content information into path information by extracting and aggregating multi-level information, including content, node, edge, and structure. It utilizes type-specific transformation functions to project heterogeneous content, potentially of unequal dimensions, into a unified latent space. For node information, HetFS distinguishes between different types of nodes and assigns weights to node properties based on their types and degrees. To dig deeper into semantic information, HetFS extracts diverse semantics from edges to measure the varying impacts of heterogeneous edges. It's worth noting that, despite alignment between nodes and semantics, differences in node structure can result in varying influences on other nodes. Therefore, structure information still plays a crucial role in integrating graph topology. Finally, all the aforementioned information is aggregated to form the ultimate similarity calculation.

In summary, this work makes the following main contributions:
\begin{itemize}
\item It introduces HetFS to answer flexible queries with ad-hoc meta-paths, integrating the heterogeneous information of content, node, semantics, and structure.
\item HetFS projects heterogeneous content into a unified latent space to integrate content information with path information, thereby enhancing the final similarity accuracy.
\item HetFS incorporates a refined weighting approach, assigning weights to nodes based on both node type and centrality and to edges based on their contribution to adjacent nodes.
\item We perform comprehensive experiments to showcase the accuracy and responsiveness of HetFS for queries with ad-hoc meta-paths in comparison to HGNNs and path-based approaches. Additionally, we show that HetFS has robust performance in various downstream applications, including link prediction, node classification, and clustering.

\end{itemize}



The structure of this paper is as follows. In Section 2, we conduct a survey of the related work. Then we provide the necessary definitions in Section 3 and present our HetFS in Section 4. We report the results of our experimental studies in Section 5 and conclude the paper in Section 6.

\section{Related Works}\label{sec2}

Due to the ability of HINs to characterize complex information, much research has been dedicated to developing specialized graph mining techniques. Node similarity discovery plays a central role in these techniques.

The traditional algorithms for computing node similarity are path-based methods. PathSim \cite{sun2011pathsim} was the first method to calculate similarity between nodes in HINs, which introduces the innovative concept of meta-paths to assess relatedness between objects of the same type through symmetric paths. HeteSim \cite{shi2014hetesim} was a more general solution capable of quantifying the similarity between heterogeneous nodes regardless of whether the node types are the same or different. It adopted a path-constrained design to capture the semantic information present in the graph. Despite the awareness of different meta-paths in heterogeneous graphs, both PathSim and HeteSim treated each meta-path equally. These approaches overlooked the varied impact across different paths, leading to a loss of valuable semantic information. Howsim \cite{wang2020howsim} introduced a decay graph to encode the aggregation of similarities across various relations, which can capture semantics automatically from HINs. Despite Howsim recognizing the varying importance of different meta-paths, it solely utilized edge type information, neglecting other factors like node centrality. This limitation renders it incapable of handling complex situations, such as distinguishing between nodes of the same type connected by edges of the same type. Additionally, link-based methods do not leverage the content information from the HINs, resulting in less accurate results for similarity search.

With the advent of machine learning, the dominant approach to processing graphs has shifted towards using graph embedding techniques, which map graph elements to vectorized representations. DeepWalk \cite{perozzi2014deepwalk} introduced the method of word embedding to graph embedding by treating nodes as words and mapping the adjacent relationships of nodes into sentences, opening the door to graph representation learning. Node2vec \cite{grover2016node2vec} combined depth-first search strategy and breadth-first search strategy to sample nodes and generate biased second-order random walk sequences. Considering the heterogeneity of the graph, MetaPath2Vec \cite{dong2017metapath2vec} introduces a meta-path-based random walk, applying skip-gram to heterogeneous graphs. These methods either utilized only structural information or combined structural and semantic information without incorporating node content information. In terms of similarity accuracy, they all fell short of expectations.

Graph Neural Networks (GNNs) are widely recognized for their effective graph modeling and have consequently found application in HINs. They leverage deep neural networks to not only aggregate link information but also aggregate content information from neighboring nodes, thereby enhancing the power of the aggregated embeddings. Inspired by Graph Attention Network (GAT) \cite{velivckovic2017graph}, which initially introduced attention mechanisms for aggregating node-level information in homogeneous networks, HAN \cite{wang2019heterogeneous} proposed a two-level attention Heterogeneous Graph Neural Network (HGNN), incorporating both node-level and semantic-level attention. To leverage additional information from HINs, MAGNN \cite{fu2020magnn} employed three major components to encapsulate information from node content, intermediate semantic nodes, and multiple meta-paths. HetGNN \cite{zhang2019heterogeneous} jointly considers both structural and content information of each node without predefining meta-paths. However, when dealing with queries with ad-hoc meta-paths, these methods fall short. They either rely on meta-path-free approaches for automatic weighting across all data, making it challenging to respond to user preferences, or they calculate similarity based on predefined paths. The former struggles to adapt to user preferences, and the latter only produces good results when pre-defined meta-paths match the user-given ones. Aligning meta-paths is crucial for optimal results, but pre-defining all possible meta-paths is nearly impossible for the considerable scale and high complexity of HINs. Although theoretically feasible, retraining node embeddings is highly time-consuming and practically challenging to apply.

As a result, there is a lack of an existing similarity method to handle queries with ad-hoc meta-paths for HINs to integrate both content and path information holistically and efficiently.

\section{Preliminaries}
This section gives an overview of HIN and SimRank. Table~\ref{tab:notationTable} presents the notations used in this paper.

\begin{table}[h]
\caption{\centering Notation Table.}\label{tab:notationTable}
\begin{tabular}{cm{9cm}}
\toprule
Notation & \makecell[c]{Description}\\
\hline
$G$ & the data graph\\
$u$, $v$ & a node in the data graph\\
$e$ & an edge in the data graph\\
$V$, $E$ & the node set and the edge set of the data graph\\
$A$, $R$ & the node type and the edge type of the data graph\\
$\mathcal{A}$, $\mathcal{R}$ & the node type set and the edge type set of the data graph\\
$\phi$, $\psi$ & the type mapping function of the node and edge\\
$n$, $m$ &  the numbers of nodes and edges in $G$\\
$N(u)$, $N_R(u)$ & the neighbor set of $u$ and the neighbor set of $u$ along relation $R$ \\
$u^{\prime}$, $u_R^{\prime}$ & one of neighbors of $u$ and one of neighbors of node $u$ along relation $R$\\
$s(u, v)$ & the similarity score of $u$ and $v$\\
$c$ & the decay factor of SimRank\\
$c_n$ & the node decay factor\\
$t$ & a tour traversed during the random walk\\
$\mathcal{P}, \lvert\overline{\mathcal{P}}\rvert$ & the meta-path set and the average number of meta-path\\
$P, \lvert\overline{P}\rvert$ & the path set of a meta-path and the average number of paths in each meta-path\\
\bottomrule
\end{tabular}
\end{table}

\subsection{Heterogeneous Information Network}

\begin{definition}[Information Network]
An information network is a directed graph $G(V, E)$, where each node $v \in V$ is mapped to a specific node type $\phi(v) \in \mathcal{A}$ by $\phi : V \rightarrow \mathcal{A}$ and each edge $e \in E$ is mapped to a specific relation type $\psi(e) \in \mathcal{R}$ by $\psi : E \rightarrow \mathcal{R}$.
\end{definition}

An information network is said to be a Heterogeneous Information Network (HIN) when $|\mathcal{A}|$ \textgreater 1 or $|\mathcal{R}|$ \textgreater 1. If $|\mathcal{A}| = |\mathcal{R}| = 1$, then the information network is considered a homogeneous one.




\begin{definition}[Meta-Path]
A meta-path $P$ is a sequence of node types in $\mathcal{A}$ and relation types in $\mathcal{R}$. It is typically represented with the notation of $A_1 \stackrel {R_1} {\longrightarrow} A_2 \stackrel {R_2} { \longrightarrow} \cdots \stackrel {R_l} {\longrightarrow} A_{l+1}$, where $A_1$ denotes the starting node type, $A_{l+1}$ denotes the ending node type, and each intermediate relation type $R_i$ denotes a connection type from node type $A_i$ and to node type $A_{i+1}$. To simplify notation, we can use node types along the meta-path to specify a meta-path if no more than one edge type exists connecting the identical pair of node types. 
\end{definition}

As an illustration, the co-author relation in DBLP, denoted by a length-2 meta path $A\stackrel{writing}{\longrightarrow}P\stackrel{written-by}{\longrightarrow}A$, can be represented in a shorter form $APA$ with no ambiguity.



\subsection{SimRank}

SimRank is a similarity measure defined on homogeneous information networks with a long history and broad applications across different domains. It is based on two fundamental principles: \textit{a}) two nodes are considered similar if connected to similar nodes; \textit{b}) a node has a similarity score of 1 with itself. Given an unweighted directed graph, ${G\,(\,V, E\,)}$, the SimRank score of two nodes $u \in V$ and $v \in V$ is defined recursively:
\begin{equation} 
	\label{(simrankEquation)}
	s(u, v)=
	\left\{
	\begin{array}{llr}
		1, & \text{if } u = v; & \\
		\frac{c}{|N(u)||N(v)|}\sum\limits_{u^{\prime} \in N(u)}\sum\limits_{v^{\prime} \in N(v)}s(u^{\prime}, v^{\prime}), & \text{otherwise.}
	\end{array}
	\right.
\end{equation}
where $N(u)$ and $N(v)$ denote the neighbor set of node $u$ and $v$, respectively, with $c$ $\in$ (0,1) commonly set to 0.6 \cite{wang2021exactsim} or 0.8 \cite{jeh2002simrank}, is a decay factor to ensure that the similarity between different nodes does not equal 1. Due to the recursiveness of Eq.~\ref{(simrankEquation)}, SimRank can integrate similarities not only from their direct neighbors of $u$ and $v$ but also from their indirect neighbors (multi-hop neighbors), resulting in accurate, reliable, and meaningful outcomes.

To gain a more intuitive understanding of the calculated values, SimRank introduced a random surfer-pairs model \cite{jeh2002simrank}. This model interprets $s(u,v)$ as the expected first-meeting distance for two random surfers, each originating from nodes $u$ and $v$ and destinating at the same node $w_l$. A tour $t$ may be written as $t= \langle w_1, \cdots w_l(w_{\text{-}l}),\cdots, w_{\text{-}1}\rangle $, and $l$ is said to be the length of the tour. Intuitively, the first surfer walks from $w_1$ to $w_2, \cdots, w_l$ and the second surfer walks from $w_{\text{-}1}$ to $w_{\text{-}2}, \cdots, w_{\text{-}l}$, and we assume $w_l = w_{\text{-}l}$. When $l = 0$, we have $w_1 = w_{\text{-}1}$. For a tour $t$ traversed during the random surf from node $u$ and $v$, here $u = w_1$ and $v = w_{\text{-}i}$, $s(u,v)$ can be reformulated based on this random surfer-pairs model as:
\begin{equation} 
    \label{(random_walk)}
    s(u, v)=\sum\limits_{t}Pr(t)c^{l}
\end{equation}
where
\begin{equation} 
    \label{(Pr)}
    Pr(t)=
    \left\{
    \begin{array}{llr}
    1, & \text{if } l = 0; & \\
    \prod\limits_{w_i \in  t}\frac{1}{|N(w_i)|}, & \text{otherwise.}
    \end{array}
    \right.
\end{equation}
and $t= \langle w_1, \cdots w_l(w_{\text{-}l}),\cdots, w_{\text{-}1} \rangle$ ranges over all the tours that the random surfers start from node $u (w_1)$ and $v (w_{\text{-}1})$ and reach a node $w_l (w_{\text{-}l})$, and $Pr(t)$ denotes the first meeting probability of tour $t$, where $N(w_i)$ denotes the set of neighbors of node $w_i$.

\section{Methodology}\label{sec3}


Due to the effectiveness of SimRank \cite{jeh2002simrank} in capturing path information, we propose HetFS to extend SimRank to HINs by aggregating multi-level information and combining content information with path information to capture node similarity fully.

This section introduces HetFS, a fast similarity search method with ad-hoc meta-paths for HINs. HetFS integrates multi-level information from content, nodes, edges, and structure to enhance search capabilities, as depicted in Fig.~\ref{fig:architecture}, which illustrates its architecture. Note that while HetFS is capable of processing both symmetric and asymmetric meta-paths, it primarily focuses on symmetric meta-paths, aligning with the principles established by PathSim~\cite{sun2011pathsim} to ensure balanced and comparable visibility between nodes. Further details will be discussed in the following sections.


\begin{figure}[h]
\centering\includegraphics[width=\textwidth]{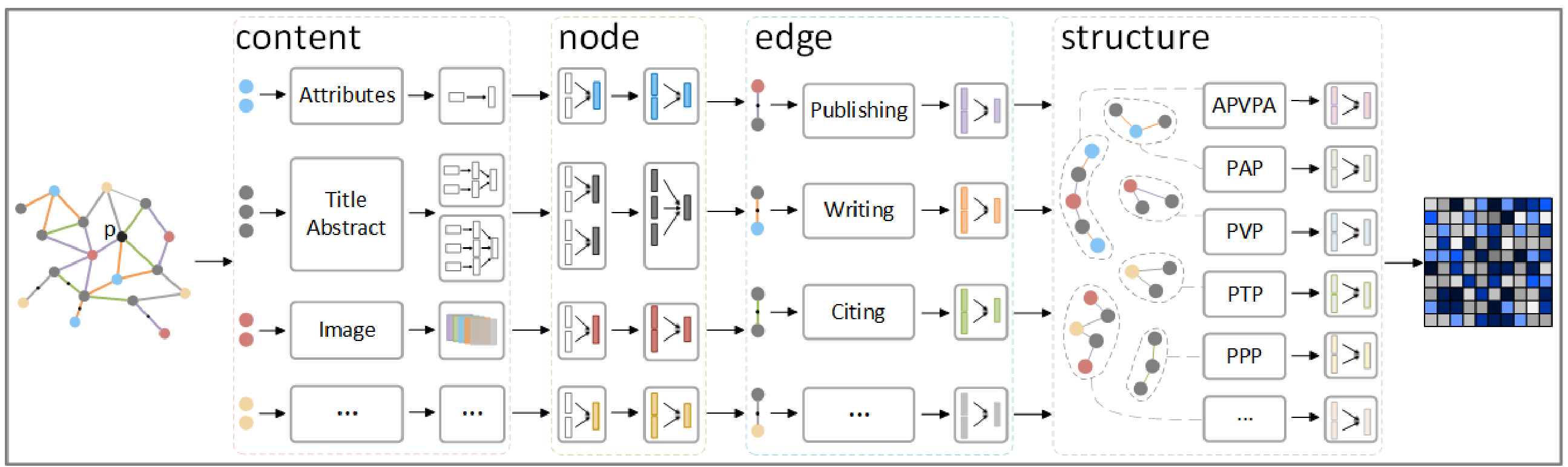}
\caption{The overall architecture of HetFS. HetFS first projects heterogeneous content information, e.g., textual data, from each node into a unified domain via a transformation function. Subsequently, the content information is integrated with node centrality, edge contribution, and structural topology to form the ultimate similarity computation.} \label{fig:architecture}
\end{figure}


\subsection{Content Information}

Nodes can contain diverse types of information as their content, such as textual data, image data, and more. Most existing methods overlook the content information carried by the nodes, which makes the information in the HIN not fully utilized, and the final result may not be satisfactory. HetFS extracts content information, integrating it into the path-based similarity, thereby enhancing the overall accuracy of the final similarity score.

Considering the unique content characteristic associated with each dataset, we project heterogeneous content $C_u$ of each node $u \in V$ into a given domain through a transformation function $f$. The $i$-th content in $C_u$ is denoted as $x_i(u)$. Note that $x_i(u)$ can undergo various processing depending on the type of content. For instance, we employ tf-idf to process textual content and CNN for image content. Specifically, for textual content, we begin by tokenizing each document, and then perform lemmatization and remove common stop words. Subsequently, we utilize Word2Vec \cite{mikolov2013efficient} to capture corpora characteristics, creating corpora-specific stop-word and synonym lists, which were then used to enhance the tokenization results of the content. Note that we generate individual stop-word lists and synonym lists for each textual dataset, with each dataset comprising a collection of text based on how it is associated with the nodes. For example, texts may be from a title or an abstract, forming two specific datasets. Finally, we calculate content score based on tf-idf. Formally, the content score of node $u$, represented as $\chi(u)$ can be formulated:
\begin{equation} 
    \label{(content score)}
    \chi(u)=\sum\limits_{i = 1, \cdots, |C_u|} f_i(x_i(u)) 
\end{equation}
where $C_u$ is the heterogeneous content of node $u$, and $f$ is the specific transformation function for each content in $C_u$.


\subsection{Node Information}

Node centrality $\alpha$ is a measure that gives the importance of a node. For instance, academic giants have higher academic influence compared to novice entrants. Inspired by PageRank, we say that a node is of high centrality if it is referenced by many nodes that themselves possess high centrality. Intuitively, a node that is well linked with other nodes is usually of high centrality.

To distinguish centrality with respect to nodes of different types, we calculate centrality separately for each node type and then aggregate the results. Based on the discussion above, we define the node centrality recursively as follows:
\begin{equation} 
    \label{nodecentrality}
    \alpha(u)=c_n \cdot \sum\limits_{R \in \mathcal{R}}\sum\limits_{u_R^{\prime}\in N_R(u)}\alpha(u_R^{\prime}) \cdot \mathbf{P}(u, u^{\prime})
\end{equation}
where $c_n$ is the node decay factor to deal with dangling links, $u_R^{\prime}$ is one of the neighbors of $u$ along relation $R$, and $\mathbf{P(u, u^{\prime})}$ denotes the transition factor as follows:
\begin{equation} 
    \label{(transition_factor)}
    \mathbf{P}(u, u^{\prime})=
    \left\{
    \begin{array}{llr}
    \frac{1}{|N_R(u)|}, & \text{if } u^{\prime} \in N_R(u); & \\
    0, & \text{otherwise.} 
    \end{array}
    \right.
\end{equation}
in which $N_R(u)$ denotes the neighbors of $u$ along relation $R$. Eq.~\ref{nodecentrality} is recursive, aligning with the intuition that a node's centrality is distributed evenly among its neighbors, thereby contributing to the centrality of the connected nodes. It can be computed by initializing with some random values and iterating the computation until convergence or reaching a specific iteration.


\subsection{Edge Information}


In HINs, the diverse types of edges convey distinct semantics, thereby contributing differently to the overall similarity score. By analyzing the statistics and structure of the data graph, we can derive the edge contribution for each edge type, which can also be adjusted based on user input and preferences. Given an HIN $G(V, E)$ with relation types $R$, a global edge contribution $\mu_R$ maps each relation type $R \in \mathcal{R}$ to a fraction:
\begin{equation} 
    \label{(contributionFunction)}
    \mu_R = RF(R) \cdot IRF(R)
\end{equation}
where $RF(R)$ and $IRF(R)$ are calculated based on the following heuristics:
\begin{enumerate}
    \item [1)] $RF(R)$ is the normalized relation frequency of relation $R$, which can be calculated as $\frac{m_R}{m}$, i.e., the fraction of the count of edges with relation $R$ in $G$ within the total count of edges in $G$, preventing the contribution coefficient from biasing toward frequently-appear edges.
    \item [2)] $IRF(R)$ is the inverse frequency of nodes associated with relation $R$, $IRF(R) = \ln \frac{n}{|n_R|}$, which is the Napierian logarithmic of the total count of nodes over the count of nodes involved with relation $R$. This count illustrates the differentiation capability among the data graph. In other words, the fewer neighbors of a relation $R$, the better a relation can differentiate a node. 
\end{enumerate}

By combining the edge contributions, we can construct a contribution graph $M(\mathcal{A}, \mathcal{R}, \mu_R)$, which is a directed graph with weighted edges, where $\mathcal{A}$ and $\mathcal{R}$ denotes the node type set and edge type set, respectively. This contribution graph illustrates how a node accumulates similarities from its neighbors across different relations, as well as how its neighbors contribute similarities to the node across different relations. Fig.~\ref{fig:contribution_graph} is an example of the contribution graph of the movie network. We can see that for the ``movie-to-movie'' similarity calculation, considering only edge contribution, 47\% similarity originates from shared actors, 41\% similarity from being part of the same film series, and 30\% from common directors.

\begin{figure}[h]
\centering\includegraphics[width=0.7\textwidth]{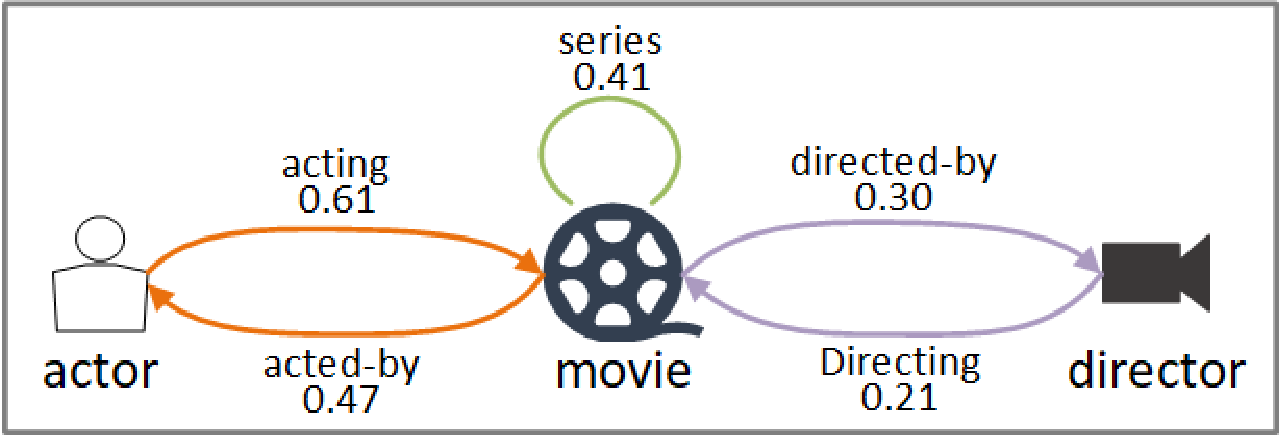}
\caption{The contribution graph of movie network.} \label{fig:contribution_graph}
\end{figure}

\subsection{Structure Information}

Similar to SimRank, we consider that information is distributed among the neighbors of a node. 
Here, we calculate the structure weight for each type of neighbor of a node separately. Given the heterogeneity of HINs, we define structure weight $\beta$ for each node $u$ based on the count of neighbors connected via type $R$ as follows:
\begin{equation} 
    \label{(strctureWeight)}
    \beta_R(u) = |N_R(u)|
\end{equation}
where $|N_R(u)|$ is the count of the neighbors of $u$ along relation $R$. The structure weight ensures that the impact of a node on its neighbors is related to the of its neighbors. 

\subsection{Integration}

Integrating content information, node information, edge information, and structure information gives the overall similarity. For a HIN $G(V, E)$, the similarity $s(u, v)$ of two nodes $u \in V$ and $v \in V$ is defined as:
\begin{equation} 
    \label{power_method}
    s(u, v)=
    \left\{
    \begin{array}{llr}
    1, & \text{if } u = v; & \\
    \sum\limits_{R\in \mathcal{R}_{\mathcal{P}}}\sum\limits_{u^{\prime} \in N_R(u)}\sum\limits_{v^{\prime} \in N_R(v)} \frac{c \cdot \chi(u^{\prime}) \cdot \chi(v^{\prime}) \cdot \alpha(u^{\prime}) \cdot \alpha(v^{\prime}) \cdot \mu_R}{\beta_R(u^{\prime}) \cdot \beta_R(v^{\prime})} \cdot s(u^{\prime}, v^{\prime}), & \text{otherwise.}
    \end{array}
    \right.
\end{equation}
where $\mathcal{R}_{\mathcal{P}}$ denotes the edge types set corresponding to specific meta-paths, $c$ ($0 < c < 1$) is the decay factor inherited from SimRank, which ensures that no other node, except the node itself, has a similarity value of 1 with that node, $\chi(u^{\prime})$ and $\chi(v^{\prime})$ are the content information of $u^{\prime}$ and $v^{\prime}$, respectively, $\alpha(u^{\prime})$ and $\alpha(v^{\prime})$ are the node centrality of $u^{\prime}$ and $v^{\prime}$, respectively, $\mu(R)$ is the edge contribution of edge type $R$, $\beta(u^{\prime})$ and $\beta(v^{\prime})$ is the structure weight of $u^{\prime}$ and $v^{\prime}$, respectively. The similarity score is 1 if $u$ and $v$ are the same node by definition. Otherwise, Eq.~\ref{power_method} combines the similarities from $N(u)$ and $N(v)$ through each $R \in \mathcal{R}_{\mathcal{P}}$. 

\begin{algorithm}
\caption{Brute-force HetFS Computation}
\label{alg:bruteforce_computation}
\begin{algorithmic}[1]
\REQUIRE heterogeneous graph $G(V, E)$, number of iterations $l$, $\chi(u)$, $\alpha(u)$, $\beta_R(u)$, $\mu_R$, node $u$, $v$
\ENSURE similarity $s$
\FOR{$i \gets 0$ to $l-1$}
\IF{$u = v$}
\STATE $s(u,v) \gets 1$
\ELSIF{$i < l-1$}
\FOR{each $u^{\prime} \in N(u)$}
\FOR{each $v^{\prime} \in N(v)$}
\STATE $s(u,v) \gets s(u,v) + \frac{c \cdot \chi(u^{\prime}) \cdot \chi(v^{\prime}) \cdot \alpha(u^{\prime}) \cdot \alpha(v^{\prime}) \cdot \mu_R}{\beta_R(u^{\prime}) \cdot \beta_R(v^{\prime})} \cdot s(u^{\prime}, v^{\prime})$
\STATE $i \gets i+1$
\ENDFOR
\ENDFOR
\ELSE
\STATE $s(u,v) \gets 0$
\ENDIF
\ENDFOR
\RETURN $s(u,v)$
\end{algorithmic}
\end{algorithm}

Algorithm~\ref{alg:bruteforce_computation} provides the pseudocode for a brute-force HetFS computation. If $u = v$, their similarity is set to 1, as shown in Line 4. From Line 5 to Line 13,  brute-force HetFS iteratively calculates the similarity between the two nodes. When the iteration count exceeds the predefined threshold $l$, indicating that the nodes are no longer potentially similar, Line 13 sets their similarity to 0.

SimRank has substantial computational costs in both time and space. Similarly, the brute-force approach of HetFS also encounters significant challenges in terms of computational overhead. In the next section, we propose an efficient optimized strategy of HetFS based on the random surfer method.


\subsection{A Path Enumeration Strategy}

The random surfer-pairs model of SimRank provides an intuitive understanding of its computation process, and we extended it from homogeneous information networks to HINs to make HetFS more intuitive. 

\noindent\textbf{Definition 3} A tour of a meta-path. A tour of a meta-path $A_1\cdots A_{l-1}A_lA_{l-1}\cdots A_1$, is defined as $t = \langle w_1\cdots w_{l}w_{l}\cdots w_{\text{-}1} \rangle$ in the HIN through the meta-path $P$, if it connects node $u$ and $v$ with $u = w_1 and v = w_{\text{-}1}$ $\phi(u) = \phi(v) = A_i$, satisfying the conditions: for each node $i = 1, \cdots, l, \phi(w_i) = \phi(w_{\text{-}i}) = A_i$, and $(w_1, w_2), \cdots, (w_{l\text{-}1}, w_l), (w_{\text{-}l}, w_{\text{-}l+1}), \cdots, (w_{\text{-}2}, w_{\text{-}1})$ are all edges in $G$. 

Here, we assume the edge type between node types $(A_i, A_{i+1})$ is unique. Given an tour $t$ connecting node $u$ and node $v$, in this case $w_1 = u$ and $w_{\text{-}1} = v$, we get the tour similarity $Pr(t)$ of two nodes $u \in V$ and $v \in V$ as follows:
\begin{equation} 
    \label{first_meeting_probability}
    Pr(t)=
    \left\{
    \begin{array}{llr}
    1, & \text{if } l = 0; & \\
    \prod\limits_{i \in \{1, \cdots, l-1, \text{-}l+1, \cdots, \text{-}1\}}\frac{c \cdot \chi(w_i) \cdot \alpha(w_i) \cdot \mu_{R_i}}{\beta_R(w_i)}, & \text{otherwise.}
    \end{array}
    \right.
\end{equation}
where $c$ is the decay factor adopted from SimRank, $l$ is the length of the instance path, $t = (w_1\cdots w_{l}\cdots w_{\text{-}1})$ is the tour, and $R_i$ is the relation between $(w_i, w_{i+1})$. Then the calculation of the node similarity $s(u, v)$ in Eq.~\ref{power_method} can be simplified as the aggregation of all tour similarity, each of which can be calculated by Eq.~\ref{first_meeting_probability}:


\begin{equation}
    \label{pathSimilarity}
    s(u,v)=\sum\limits_{P\in \mathcal{P}}\sum\limits_{t\in P}Pr(t)
\end{equation}
where $\mathcal{P}$ is the set of meta-path user defined, $P$ is one meta-path in $\mathcal{P}$, and $t$ is a tour of $u$ and $v$ along the meta-path $P$.

Algorithm~\ref{alg:path_enumeration_computation} outlines the pseudocode for random surfer-pairs model of HetFS. Line 1 generates random walks for node $u$ and $v$ within $l$ steps, respectively. Their meeting probability is then added to the final similarity if the random surfers encounter each other, as depicted in Line 9.


\begin{algorithm}
\caption{HetFS Computation with random surfer-pairs model}\label{alg2}
\label{alg:path_enumeration_computation}
\begin{algorithmic}[1]
\REQUIRE heterogeneous graph $G(V, E)$, number of iterations $l$, $\chi(u)$, $\alpha(u)$, $\beta_R(u)$, $\mu_R$, node $u$, $v$
\ENSURE similarity $s$
\STATE generate $k$ tours $t_u = \langle w_1w_2\cdots w_l \rangle $, $t_v = \langle w_{\text{-}1} w_{\text{-}2}\cdots w_{\text{-}l} \rangle $ from $u$ and $v$, respectively, within $l$ steps
\STATE $s(u,v) \gets 0$
\IF{$u = v$}
\STATE $s(u,v) = 1$
\ELSE
\FOR{each tour $t$}
\IF{$w_l = w_{-l}$}
\STATE let $t = \langle w_1w_2\cdots w_l(w_{\text{-}l})\cdots w_{\text{-}2}w_{\text{-}1} \rangle$
\STATE $s(u,v) \gets s(u,v) + Pr(t)$
\ENDIF
\ENDFOR
\ENDIF
\RETURN $s(u,v)$
\end{algorithmic}
\end{algorithm}



\begin{figure}[h]
\centering\includegraphics[width=0.95\textwidth]{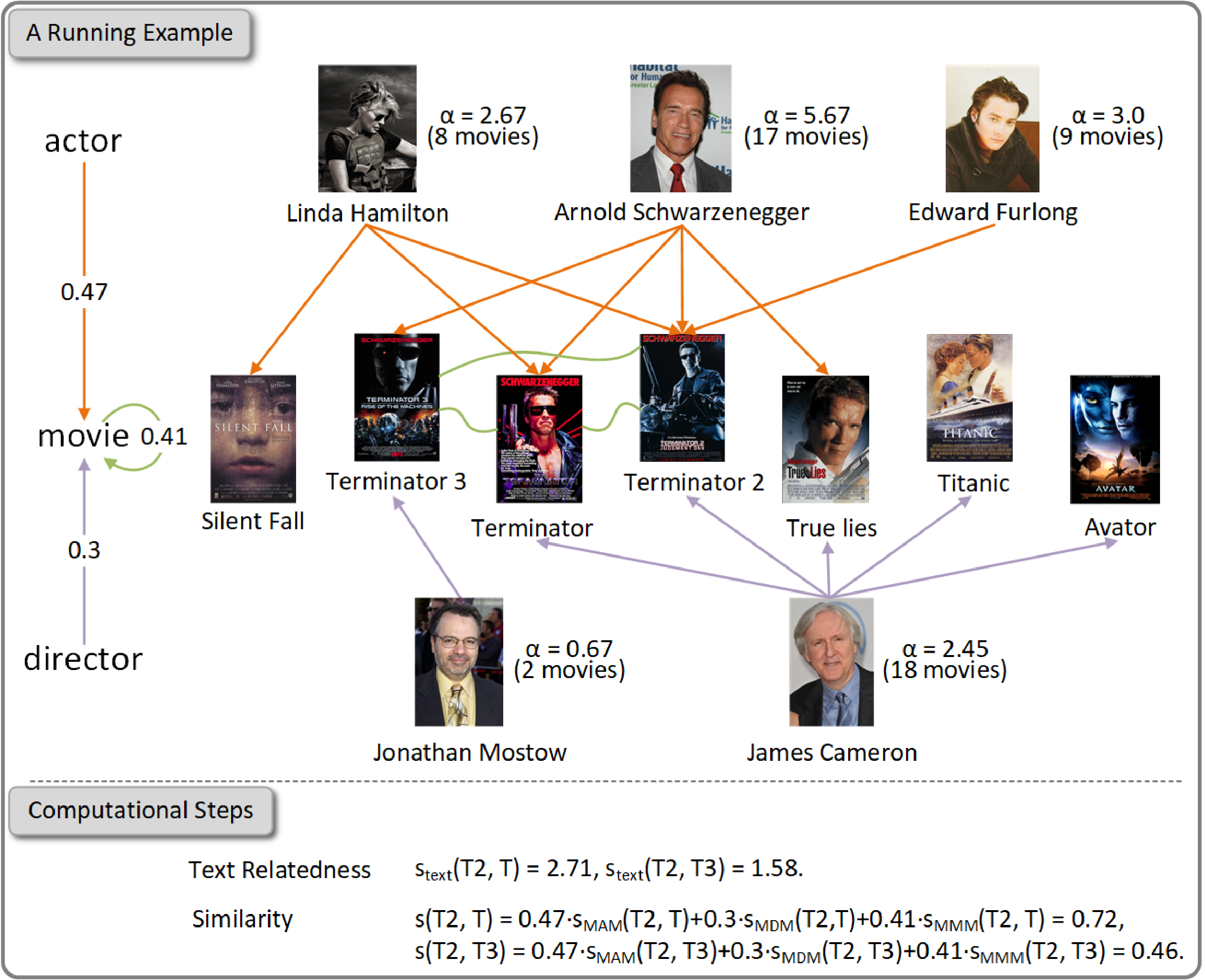}
\caption{A running example of movie network.} \label{fig:running_example}
\end{figure}


We illustrate our algorithm using a sub-network from the ``IMDB" dataset, aimed at identifying movies similar to ``Terminator 2" (T2) via the MAM, MMM, and MDM meta-paths, as shown in Fig.~\ref{fig:running_example}. There are two methods for computing similarities: the power method (Eq.~\ref{power_method}) and a random-surfer-pairs model (Eq.~\ref{pathSimilarity}), both necessitating the preprocessing of content relatedness involving movie data and initial online computations.



The power method computes similarities across all movie pairs, while the path enumeration method specifically assesses similarities involving T2. Both methods require the pre-processing and loading data such as content relatedness, node centrality, edge contributions, and structural weight. Here we focus on explaining the path enumerating approach. The text relatedness between T and T2 is 2.71, and between T and T3 is 1.58. Schwarzenegger starred in 17 movies, resulting in a centrality score of 5.67, Hamilton in 8 movies with a score of 2.67, and Furlong in 9 movies with a score of 3.0. Directors James Cameron and Jonathan Mostow have centrality scores of 2.45 and 0.67, respectively. Edges contributions of starred-in, directed-by, and movie series are 0.47, 0.30, and 0.41, respectively. Then, the similarity is calculated as follows:
\begin{itemize}
    \item $s(\text{T2, T}) = 0.47\cdot s_{\text{MAM}}(\text{T2, T})+0.3\cdot s_{\text{MDM}}(\text{T2, T})+0.41\cdot s_{\text{MMM}}(\text{T2, T}) = 0.47*\frac{5.67+2.67}{3*3})+0.3*(\frac{6}{17*17})+0.41*(\frac{2.71}{2*2}) = 0.72$.
    \item $s(\text{T2, T3}) = 0.47\cdot s_{\text{MAM}}(\text{T2, T3})+0.3\cdot s_{\text{MDM}}(\text{T2, T3})+0.41\cdot s_{\text{MMM}}(\text{T2, T3}) = (0.47*\frac{5.67}{3*3})+0.3*0+0.41*(\frac{1.58}{2*2}) = 0.46$.
    \item  Analogously, we have $s(\text{T2, True Lies}) = 0.30$, $s(\text{T2, Titanic}) = 0.01$.
\end{itemize}

HetFS captures deeper content knowledge and integrates them into similarity calculations, enhancing accuracy over traditional methods that may overlook text-related elements. By utilizing advanced techniques, HetFS swiftly processes ad-hoc meta-paths, improving responsiveness and scalability in real-world applications, thus outperforming conventional path-based and graph neural network methods.




\subsection{Analysis}



The similarity search in HetFS, as detailed in Algorithm~\ref{alg2}, is highly efficient. Theorem~\ref{thm1} demonstrates that HetFS can estimate a similarity score between a specific node and other involving nodes in $\mathcal{O}(\overline{d}^lm + \log k)$ time. This performance is independent of the graph size, showcasing the scalability of HetFS.

\begin{theorem}\label{thm1}
In HetFS, the average time required to compute a similarity score between a specific node and other involving nodes in a user query with ad-hoc meta-paths is bounded by $\mathcal{O}(\overline{d}^lm)$. The total time for processing the query is bounded by $\mathcal{O}(\overline{d}^lm + \log k)$.

\end{theorem}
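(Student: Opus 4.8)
The plan is to charge the cost of Algorithm~\ref{alg2} to three phases, keeping the offline‑computed tables $\chi(\cdot)$, $\alpha(\cdot)$, $\beta_R(\cdot)$ and $\mu_R$ outside the query‑time budget since the algorithm receives them as input. First I would bound the number of tours the random surfers can generate out of the query node $u$. Along any meta‑path in $\mathcal{P}$, a surfer at a node $w_i$ may continue only into $N_{R_i}(w_i)$, so the branching at each step is, on average, $\overline{d}$; hence the number of half‑tours $\langle w_1\cdots w_l\rangle$ of length $l$ rooted at $u$ is $\mathcal{O}(\overline{d}^l)$ in expectation, and summing over the user‑given meta‑paths multiplies this only by the constant $\lvert\overline{\mathcal{P}}\rvert$, because a meta‑path restriction can only prune, never enlarge, the walk tree. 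Generating one such half‑tour and reading off, at each of its $l$ steps, the precomputed quantities $\chi(w_i),\alpha(w_i),\mu_{R_i},\beta_R(w_i)$ needed to update $Pr(t)$ incrementally (Eq.~\ref{first_meeting_probability}) costs $\mathcal{O}(1)$ per step with the adjacency lists and weight tables, i.e.\ $\mathcal{O}(l)$ per tour, a constant for fixed $l$.

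Second, I would account for completing the tours toward, and delivering similarities to, \emph{all} involving nodes rather than one fixed $v$. A tour contributes to $s(u,v)$ exactly when a half‑tour from $u$ and a half‑tour from $v$ coincide at the midpoint $w_l=w_{\text{-}l}$; bucketing the generated half‑tours by their midpoint lets every coincidence be detected and its term $Pr(t)$ accumulated in $\mathcal{O}(1)$ amortised time, so accumulation is absorbed into the enumeration cost. The product form of the bound arises here: the $\mathcal{O}(\overline{d}^l)$ tours rooted at $u$, together with the fact that routing them out to every reachable endpoint in the $l$‑hop meta‑path neighbourhood may, in the worst case, sweep over essentially all $m$ edges of $G$, caps the relevant work at $\mathcal{O}(\overline{d}^l m)$ (equivalently, $\mathcal{O}(\lvert\overline{P}\rvert)$ path instances per meta‑path), and this dominates the similarity estimation. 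Finally, the $k$ generated tours — equivalently, the ranked set of most‑similar involving nodes — must be organised once: a sort, a binary search, or priority‑queue maintenance with $\mathcal{O}(\log k)$ updates. Summing the phases yields $\mathcal{O}(\overline{d}^l m)$ for the core estimation and $\mathcal{O}(\overline{d}^l m + \log k)$ for the whole query; since no single node's amortised share of the work exceeds the former, the stated average‑time bound follows.

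I expect the delicate step to be the combinatorial bound behind the factor $\overline{d}^l m$ and its reconciliation with the claim that the cost is ``independent of the graph size.'' The honest content of that claim is that the surfers never leave the $l$‑hop meta‑path neighbourhood of $u$, whose size is $\min(\mathcal{O}(\overline{d}^l),\mathcal{O}(m))$ rather than $\Theta(m)$; the product $\overline{d}^l m$ is only a coarse envelope for the enumeration, and the argument must make precise (i) that $\overline{d}$ is the \emph{average} out‑degree along the permitted relation types, so $\overline{d}^l$ holds in expectation rather than worst case, and (ii) that revisiting nodes or edges during the enumeration does not inflate the count beyond $\mathcal{O}(\overline{d}^l m)$. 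Carrying the expectation through cleanly — over the surfers' random choices and over the degree distribution that makes ``$\overline{d}$'' the right parameter — is the part needing care; the remaining bookkeeping of per‑step $\mathcal{O}(1)$ work and the $\mathcal{O}(\log k)$ organisational overhead is routine.
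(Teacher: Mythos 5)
Your proposal is correct and follows essentially the same route as the paper's proof: a calculation phase bounded by the number of tours (proportional to $\overline{d}^l$ via the average branching over the user-given meta-paths) times an $\mathcal{O}(m)$ per-path edge-traversal cost, plus an $\mathcal{O}(\log k)$ search phase for organising the top-$k$ results, yielding $\mathcal{O}(\overline{d}^lm + \log k)$ overall. Your added remarks on midpoint bucketing and on the coarseness of the $\overline{d}^lm$ envelope are refinements of bookkeeping, not a different argument.
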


\begin{proof}
The estimation of top-$k$ similarity score involves two main parts.

Calculation Phase. To calculate $Pr(t)$ for a single tour $t$, HetFS begins by traversing over each tour within $l$ steps. We notice that in each iteration of the random walk, each edge in the graph is traversed at most once. Thus, the time complexity of each path is $\mathcal{O}(m)$. Following this, it aggregates tours across various meta-paths. Considering the average degree in the graph as $\overline{d}$, the average number of meta-path as $\lvert\overline{\mathcal{P}}\rvert$, and the average number of paths in each meta-path as $\lvert\overline{P}\rvert$, the overall average number of paths $\lvert\overline{P}\rvert\lvert\overline{\mathcal{P}}\rvert$ is proportional to $\overline{d}^l$. Consequently, the calculating cost is $\mathcal{O}(\overline{d}^lm)$.

Search Phase. The process of retrieving top-$k$ similar nodes based on the HetFS similarity scores introduces an additional time cost of $\mathcal{O}(\log k)$.

In summary, the overall time complexity for retrieving top-$k$ similar nodes is $\mathcal{O}(\overline{d}^lm + \log k)$, confirming the efficiency of the algorithm.




\end{proof}

\section{Experiments}\label{sec5}

Extensive experiments are conducted to address the following research questions. This section reports our results.
\begin{itemize}
    \item \textbf{RQ1}. How does HetFS perform compared to state-of-the-art methods for user queries with ad-hoc meta-paths in terms of both effectiveness and efficiency?
    \item \textbf{RQ2}. How does HetFS perform compared to state-of-the-art methods across various downstream graph mining tasks, including link prediction, node clustering, and node classification?
    \item \textbf{RQ3}. What is the impact of different components, such as node information, edge information, and content information, on the performance of HetFS?
\end{itemize}

\subsection{Experimental Design}

We describe the details of datasets, methods, how to set the ground truth and parameters, and how to assess the effectiveness of each method in our experiments.

\subsubsection{Data Description} 

We use 3 real-world datasets described as follows to evaluate the performance of HetFS as compared to state-of-the-art baselines.
\begin{itemize}
\item DBLP $\footnote{https://aminer.org/data}$: is an academic dataset \cite{tang2008arnetminer} extracted from website sources. We adopt a subset of Academic that includes information about paper writing, citations, publications, titles, and abstracts, covering three types of nodes and other related content. Authors are categorized into four research areas: DB, DM, NLP, and CV. Specifically, we extracted the titles and abstracts of all papers and aggregated them into a title corpus and an abstract corpus. Using these two corpora as references, HetFS calculated the textual similarity between each pair of papers.
\item IMDB $\footnote{https://grouplens.org/datasets/hetrec-2011/}$: is a dataset that encompasses information about both movies and TV shows. It includes details such as cast, production crew, plot summaries, and related information, covering four types of nodes and other related content. The movies are categorized into five classes based on their genre information: Action, Comedy, Drama, Romance, and Thriller.
\item LastFM $\footnote{https://grouplens.org/datasets/hetrec-2011/}$: is an online music website containing data on music listening behavior. It comprises information about users, artists, and tags, along with their interactions. We use the dataset released by MAGNN \cite{fu2020magnn} for the link prediction task.
\end{itemize}
Table 3 summarizes the dataset statistics.

\begin{table}[h]
\fontsize{12}{12}\selectfont
\caption{\centering Statistics of datasets.}\label{statisticOfDatasets}
\footnotesize
\begin{tabular}{cccccccccm{5cm}}
\toprule
Dataset & \multicolumn{2}{c}{Entity} & \multicolumn{2}{c}{Edge} \\
\midrule
\multirow{2}*{Academic} & \textbf{A}uthor: 28645 & \textbf{P}aper: 21044 & A-P: 69311 & P-P: 34238 \\
 & \textbf{T}erm: 22551 & \textbf{V}enue: 18 & P-T: 171774 & V-P: 21044 \\
\midrule
\multirow{3}*{IMDB} & \textbf{M}ovie: 8742 & \textbf{D}irector: 3279 & D-M: 6582 & A-M: 19612 \\
 & \textbf{A}ctor: 9418 & \textbf{T}erm: 7295 & M-T: 16247 & M-G: 4914 \\
 & \textbf{G}enre: 17 & & & \\
\midrule
\multirow{2}*{LastFM} & \textbf{U}ser: 1892 & \textbf{A}rtist: 17632 & U-U: 12717 & U-A: 92834 \\
 & \textbf{T}ag: 1088 & & A-T: 23253 & \\
\bottomrule
\end{tabular}
\end{table}

\subsubsection{Baseline} 

We compare HetFS against various methods, including traditional graph similarity search methods, graph embedding models, and GNNs. The baseline methods are outlined below.

\begin{itemize}
\item HGT \cite{hu2020heterogeneous} is a heterogeneous GNN that extends the transformer architecture to the graph-structured data. It utilizes the self-attention mechanism of the transformer to incorporate the unique characteristics of HINs. 
\item HetGNN \cite{zhang2019heterogeneous} is a heterogeneous GNN with the ability to capture both structural and content-related information from heterogeneous graphs. It utilizes attention mechanisms to aggregate information from neighboring nodes.
\item MHGNN \cite{li2023metapath} is a meta-path-based heterogeneous GNN with the ability to capture both structural and content-related information from heterogeneous graphs. 
\item MAGNN \cite{fu2020magnn} is a heterogeneous GNN designed to capture information from various levels by incorporating node attributes, intermediate semantic nodes, and multiple meta-paths. This enables MAGNN to leverage both network topology and content-related information for effective representation learning in HINs.
\item HAN \cite{wang2019heterogeneous} is a heterogeneous model that proposes an attention mechanism to address the heterogeneity inherent in nodes and edges. Specifically, it employs two layers of attention structures to capture weights at both the node and semantic levels intricately.
\item RGCN \cite{schlichtkrull2018modeling} is an extension of GCN for relational (multiple edge types) graphs, which can be conceptualized as a weighted combination of standard graph convolutions, each tailored to a specific edge type within the graph.
\item DeepWalk \cite{perozzi2014deepwalk} is a homogeneous graph embedding method that pioneered the use of random walks to transform the graph structure into sequences, subsequently utilizing these sequences to embed nodes of the graph.
\item PathSim \cite{sun2011pathsim} is a traditional heterogeneous similarity measure calculated by the path instance count along a specific meta-path. It addresses the multi-edge similarity search problem between two nodes.
\end{itemize}

\subsubsection{Experimental Setup}

When dealing with meta-path-free scenarios, PathSim aggregates the results from all possible paths and then merges them and sorts them to obtain the final ranking. Specifically, we set $\epsilon$ to 0.000005 and $k$ to 1000, that is, we consider results with similarity values above this threshold  $\epsilon$ and rank $k$ within the top 1000 nodes as similar nodes. Since DeepWalk handles meta-path-free scenarios, it does not distinguish between node types and conducts random walks on the entire graph to generate preprocessed paths. For the HetGNN algorithm, we adopt the experimental parameters from the original paper.

All experiments are conducted on a Linux machine featuring an Intel(R) Xeon(R) Silver 4208 CPU running at 2.1GHz and 128GB of memory.

\subsection{Ad-hoc Queries (RQ1)}

As mentioned earlier, various meta-paths convey varied semantic meanings, leading to diverse query results. We focus on the differences in effectiveness and efficiency of the algorithm under ad-hoc query scenarios.

First, a case study is conducted on the ``IMDB" dataset. We use the movie ``Terminator 2: Judgment Day", a sci-fi film directed by James Cameron and starring Arnold Schwarzenegger, among others, as an example and query the top 8 movies that are most similar to it. The experiment compares outcomes for three scenarios: two meta-paths (``MDM", i.e., movie-director-movie, ``MAM," i.e., movie-actor-movie) and meta-path-free. These scenarios represent co-actors, movies directed by the same director, and a comprehensive search for similar movies using all available information. Graph embedding methods and HGNNs rank nodes based on their preference score, computed as the inner product between their embeddings.

As previously highlighted, diverse meta-paths encapsulate unique semantic meanings, reflecting varied user interests in information. This underscores our emphasis on the disparate outcomes yielded by searches conducted through different paths. Table~\ref{tab:result_imdb} displays the similarity results obtained under different specific meta-paths, revealing notable variations. The experimental results are consistent with the expectations. Movies retrieved based on the ``MAM" meta-path have similar leading actors to ``Terminator 2", while movies retrieved based on the ``MDM" meta-path have similar directors to ``Terminator 2".

\begin{table*}[htbp]
\centering
\caption{\centering Case study on query ``Terminator 2" under different meta-paths on ``IMDB" dataset.}
\footnotesize
\renewcommand\arraystretch{1.2}
\resizebox{\textwidth}{!}{
\begin{tabular}{|c|ll|ll|}
\multicolumn{1}{c}{\multirow{1}[0]{*}{}} & \multicolumn{2}{c}{(a) PathSim} & \multicolumn{2}{c}{(b) DeepWalk} \\
\hline
Rank  & meta-path: MAM   & meta-path: MDM   & meta-path: MAM   & meta-path: MDM \\
\hline
1     & True Lies & True Lies & Hercules in New York & True Lies \\
2     & Eraser & The Abyss & Pumping Iron & The Abyss \\
3     & Jingle All the Way & Aliens & Detroit Rock City & Aliens \\
4     & Conan the Barbarian & Titanic & King Kong Lives & Titanic \\
5     & Total Recall & Ghosts of the Abyss & The Kid \& I & Aliens of the Deep \\
6     & End of Days & Aliens of the Deep & Kindergarten Cop & Ghosts of the Abyss \\
7     & The Running Man & -     & Brainscan & My Name Is Bruce \\
8     & Red Heat & -     & Children of the Corn & On\_Line \\
\hline
\multicolumn{1}{c}{\multirow{1}[0]{*}{}} & \multicolumn{2}{c}{(c) HetGNN} & \multicolumn{2}{c}{(d) HetFS} \\
\hline
Rank  & meta-path: MAM   & meta-path: MDM   & meta-path: MAM   & meta-path: MDM \\
\hline
1     & Jingle All the Way & Titanic & Terminator & True Lies \\
2     & End of Days & Ghosts of the Abyss & True Lies & Aliens \\
3     & Total Recall & The Abyss & Total Recall & The Abyss \\
4     & American History X & Aliens of the Deep & Jingle All the Way & Titanic \\
5     & Terminator 3 & True Lies & End of Days & Ghosts of the Abyss \\
6     & Pumping Iron & Aliens & The Running Man & Aliens of the Deep \\
7     & Kindergarten Cop & El Callejon de los Milagros & Red Heat & - \\
8     & Hercules in New York & Smokey and the Bandit Part 3 & Pumping Iron & - \\
\hline
\end{tabular}%
}
\label{tab:result_imdb}%
\end{table*}%

Next, we enumerate all paths with a length of 2, treating them as the complete semantics, to conduct meta-path-free experiments. The outcomes are listed in Table~\ref{tab:result_imdb_meta-path-free}. Our method, HetFS, demonstrates the ability to retrieve movies associated with the provided movie effectively. Specifically, HetFS not only identifies movies with the same main actor or director but also discovers series such as ``Terminator 3" and ``Terminator." The former appears only in the top ten search results of PathSim among the other three methods, while the latter does not appear in the top ten search results of any other methods. Series are generally considered to have high relevance. 


\begin{table*}[h]
\centering
\caption{\centering Case study on query ``Terminator 2" on ``IMDB" dataset under meta-path-free scenarios.}
\footnotesize
\renewcommand\arraystretch{1.2}
\resizebox{\textwidth}{!}{
\begin{tabular}{|c|l|l|l|l|}
\hline
Rank & PathSim & DeepWalk & HetGNN & HetFS \\
\hline
1 & The Abyss & True Lies & True Lies & Terminator \\
2 & Aliens & Ghosts of the Abyss & Aliens of the Deep & Terminator 3 \\
3 & Titanic & Aliens of the Deep & Pumping Iron & True Lies \\
4 & Ghosts of the Abyss & Detroit Rock City & Ghosts of the Abyss & Titanic \\
5 & Aliens of the Deep & Brainscan & Hercules in New York & The Abyss \\
\hline
\end{tabular}%
}
\label{tab:result_imdb_meta-path-free}%
\end{table*}%

Next, We utilize the ``DBLP" dataset and retrieve the top 8 authors most similar to Jiawei Han, who has the highest number of paper records in the dataset (154 papers). The experiment compares results for three scenarios: two meta-paths and meta-path-free, representing co-authors, co-conference participants, and a comprehensive search for similar authors using all available information. 

The outcomes are displayed in Table~\ref{tab:result_dblp}. Notably, Yizhou Sun was a student of Jiawei Han, and now being an influential mentor in the field of graph data mining. They are close collaborators with strong connections. HetFS and DeepWalk rank Yizhou Sun at the top, while HetGNN is in the third position, and PathSim is in the sixth position. However, the results of HetGNN are not satisfactory. For instance, author Philip Yu did not appear in the top ten results of other algorithms but was considered by HetGNN as the most similar author, ranked first.

\begin{table*}[h]
\caption{\centering Case study on query Jiawei Han under different meta-paths on ``DBLP" dataset.}\label{tab:result_dblp}
\scriptsize
\renewcommand\arraystretch{1.3}
\resizebox{\textwidth}{!}{
\begin{tabular}{|c|ccc|ccc|m{2cm}}
\multicolumn{1}{c}{\multirow{1}[0]{*}{}} & \multicolumn{3}{c}{\textbf{(a) PathSim}} & \multicolumn{3}{c}{\textbf{(b) DeepWalk}} \\
\hline
\textbf{Rank} & meta-path: APA & meta-path: AVA & meta-path-free & meta-path: APA & meta-path: AVA & meta-path-free \\
\hline
1 & \textbf{Yizhou Sun} & Philip S. Yu & Philip S. Yu & Xiaolei Li & Haixun Wang & \textbf{Yizhou Sun} \\
2 & Xifeng Yan & Christos Faloutsos & Christos Faloutsos & Tim Weninger & Michalis Vazirgiannis & Xiao Yu \\
3 & Chi Wang & Hui Xiong & Divesh Srivastava & Brandon Norick & Kevin Chen-Chuan & Marina Danilevsky \\
4 & Xiao Yu & Wei Wang & Xifeng Yan & \textbf{Yizhou Sun} & Min Wang & Bo Zhao \\
5 & Jing Gao & Tao Li & Gerhard Weikum & George Brova & C. Lee Giles & Yintao Yu \\
6 & Bo Zhao & Jian Pei & \textbf{Yizhou Sun} & Marina Danilevsky & Francesco Bonchi & Tim Weninger \\
7 & Dong Xin & Jie Tang & Aristides Gionis & Yintao Yu & Jianyong Wang & Zhijun Yin \\
8 & Bolin Ding & Haixun Wang & Wei Wang & Bo Zhao & Enhong Chen & Fangbo Tao \\
9 & Deng Cai & Lei Chen & Francesco Bonchi & Tianyi Wu & HweeHwa Pang & Jialu Liu \\
10 & Xiaofei He & Wei Fan & Tao Li & Roland Kays & Aixin Sun & Xiaolei Li \\
\hline
\multicolumn{1}{c}{\multirow{1}[0]{*}{}} & \multicolumn{3}{c}{\textbf{(c) HetGNN}} & \multicolumn{3}{c}{\textbf{(d) HetFS}} \\
\hline
\textbf{Rank} & meta-path: APA & meta-path: AVA & meta-path-free & meta-path: APA & meta-path: AVA & meta-path-free \\
\hline
1 & Jimeng Sun & Rajeev Rastogi & Philip Yu & Quanquan Gu & Philip S. Yu & Quanquan Gu\\
2 & Brandon Norick & Tamraparni Dasu & Chen Chen & \textbf{Yizhou Sun} & Christos Faloutsos &  \textbf{Yizhou Sun} \\
3 & Guojie Song & Guiquan Liu & \textbf{Yizhou Sun} & Dong Xin & Divesh Srivastava & Dong Xin \\
4 & Zeng Lian & Dashun Wang & Xifeng Yan & Deng Cai & Wei Wang & Deng Cai \\
5 & Marina Danilevsky & Xiao Yu & Bolin Ding & Chi Wang & Yongxin Tong & Chi Wang \\
6 & \textbf{Yizhou Sun} & Ming Li & Xiao Yu & Tim Weninger & Lei Chen & Tim Weninger \\
7 & Christos Boutsidis & Neal E. Young & Peixiang Zhao & Philip S. Yu & Haixun Wang & Philip S. Yu \\
8 & Jinyan Li & Jian Xu & Spiros Papadimitriou & Xiaofei He & Tao Li & Xiaofei He \\
9 & Yang Cao & Robson Leonardo & Sangkyum Kim & Xiao Yu & Jian Pei & Xiao Yu \\
10 & Yu-Ru Lin & Chengnian Sun & Hong Cheng & Jing Gao & Hui Xiong & Jing Gao \\
\hline
\end{tabular}
}
\end{table*}

In order to meet the demands of ad-hoc queries from users, the algorithm's responsiveness in switching meta-paths is crucial. We then recorded the CPU time required by different methods to search for similar nodes based on different paths. Fig.~\ref{fig:time_cost_compare} illustrates the efficiency performance of the compared methods. A logarithmic scale is used in the graph to visualize the differences in results better. The graph embedding method and HGNNs require a significant amount of time to train node embeddings. In the meta-path-free scenario, the most time-consuming algorithm (HetGNN) takes 63,038,114 ms (approximately 105 min) and 5,281,670 ms (approximately 88 min) to process queries on the ``DBLP'' dataset and ``IMDB'' dataset, respectively. Such long waiting times are almost intolerable for ad-hoc searches. On the other hand, PathSim and HetFS do not require time-consuming training, and since they support single-source queries, their average time costs for processing a query of the ``DBLP'' dataset are only 6 ms and 8 ms, respectively. 

\begin{figure}[h]
    \centering
        \includegraphics[width=0.9\textwidth]{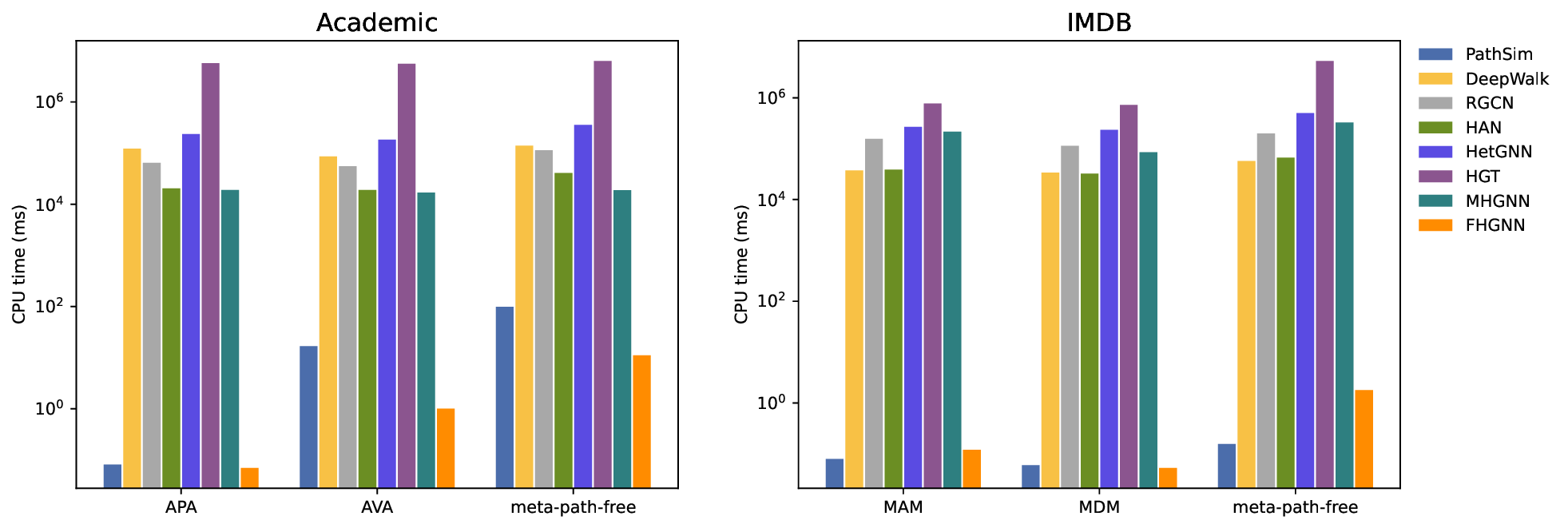}
    \caption{The time cost under different meta-paths on ``DBLP'' and ``IMDB'' datasets.} \label{fig:time_cost_compare}
\end{figure}

\subsection{Applications (RQ2)}

To validate the practicality of HetFS, we compare it with measures across various tasks.

\subsubsection{Link Prediction}



Two split settings are employed for the link prediction task. For the ``DBLP'' dataset, given a time $Ts$, prior to the split year ($Ts$), data is assigned as training data, while data post-Ts is utilized as test data. We set $Ts$ to 2014 in the ``DBLP" dataset. We remove author-paper edges post-2014 and retrieve the top-k similar authors of author $u$ based on existing links. These $k$ candidate authors are then regarded as having a collaboration link with author $u$. For the ``LastFM'' dataset, we randomly sample a subset of links for training and reserve the remaining links for evaluation, maintaining a train/test ratio of 7:3. $AUC$ and $F1$ are utilized for quality measurement. Moreover, during evaluation, only new links between nodes in the test data are considered, and any duplicated links are removed.


Table~\ref{linkprediction} presents the performance metrics for all models, highlighting superior results in bold. It is observed that HetFS exhibits relatively favorable outcomes compared to the graph embedding method and HGNNs, indicating that the systematic integration of path similarity and content similarity effectively captures graph information for link prediction. We noticed that the results of PathSim are not very satisfactory. This is because PathSim tends to treat authors with comparable connections to a common intermediate node as similar. This is because PathSim fails to incorporate content information in the graph and its computational principles. PathSim tends to equate similarity with the consistency of path quantities between two nodes and intermediate nodes. While theoretically sound, this often leads to errors in practical applications.



\begin{table*}[h]
\centering
\caption{\centering Experimental results (\%) on ``DBLP'' and ``LastFM'' datasets for the link prediction tasks. Vacant positions (``-") are due to a lack of similarity based on that method.}\label{linkprediction}
\footnotesize
\resizebox{\textwidth}{!}{
\begin{tabular}{ccccccccccc}
\toprule
          &       & \multicolumn{1}{l}{PathSim} & \multicolumn{1}{l}{DeepWalk} & \multicolumn{1}{l}{RGCN} & \multicolumn{1}{l}{MAGNN} & \multicolumn{1}{l}{HetGNN} & \multicolumn{1}{l}{MHGNN} & \multicolumn{1}{l}{HGT} & \multicolumn{1}{l}{HetFS} \\
    \midrule
    \multirow{2}[0]{*}{DBLP} & AUC   & 54.15  & 71.67  & 75.02 & -  & 76.60  & 75.64 & 75.02 & \textbf{76.91} \\
          & MRR   & 58.9  & 73.84  & \textbf{79.76} & -  & 78.48  & 78.13 & 77.94 & 78.63 \\
    \midrule
    \multirow{2}[0]{*}{LastFM} & AUC   & 43.56  & 50.49  & 55.87 & 54.66 & \textbf{59.04} & 58.43 & 54.44 & 58.29 \\
          & MRR   & 58.06  & 65.53  & 72.08 & 68.12 & 77.56 & 75.5  & 73.18 & \textbf{79.37} \\
\bottomrule

\end{tabular}
}
\end{table*}

\subsubsection{Node Clustering and Classification}

For node clustering and classification tasks, we follow the methodology outlined in HetGNN \cite{zhang2019heterogeneous}. We categorize authors in the ``DBLP" dataset into four distinct research domains: DM, CV, NLP, and DB. The top three venues for each domain are selected, and authors are labeled based on the predominant publication venues within these domains. Authors lacking publications in these venues are omitted from the evaluation. Node information is derived from the complete dataset.

To cluster node, we assumed that authors sharing similar research interests were likely to be active in similar domains. Consequently, we classified authors based on the conference categories where they exhibited the highest participation. More precisely, we record the most frequently attended conference for each author and multiply this information by the similarity value between the target author and others, which is calculated as the HetFS score in a meta-path-free manner. Then, the label preference scores were accumulated for each label, with the top-ranked label used for clustering. This method was applied to derive clustering results for HetFS and PathSim. For graph embedding methods and HGNNs, we utilized the learned node embeddings as input for a k-means clustering algorithm. Clustering performance was assessed using metrics such as normalized mutual information (NMI) and adjusted Rand index (ARI).

A methodology akin to node clustering was applied for the multi-label classification task to derive node classification outcomes for PathSim and HetFS. For graph embedding methods and HGNNs, the acquired node embeddings served as input for a logistic regression classifier. Notably, the train/test ratio was set at 1:9. Evaluation metrics encompass both Micro-F1 and Macro-F1.

Table~\ref{clustering&classification} presents the results of all methods, with the best outcomes highlighted in bold. The observations are as follows: (1) Most models exhibit excellent performance in multi-label classification, achieving high Macro-F1 and Micro-F1 scores (over 0.88 in the ``DBLP'' dataset and 0.55 in the ``IMDB'' dataset). This is reasonable given the distinct nature of authors across the four selected domains. (2) Graph embedding methods and HGNNs yield the best classification results, while HetFS also performs well. This indicates that HetFS effectively learns graph information for the node clustering task. (3) HetFS outperforms PathSim in both node clustering and node classification tasks, demonstrating that leveraging content information enhances embedding performance.


\begin{table*}[h]
\centering
\caption{\centering Experimental results (\%) on ``DBLP'' and ``IMDB'' datasets for the node clustering and classification task.}\label{clustering&classification}
\footnotesize
\resizebox{\textwidth}{!}{
\begin{tabular}{ccccccccccc}
\toprule
          &       & \multicolumn{1}{l}{PathSim} & \multicolumn{1}{l}{DeepWalk} & \multicolumn{1}{l}{RGCN} & \multicolumn{1}{l}{HAN} & \multicolumn{1}{l}{MAGNN} & \multicolumn{1}{l}{HetGNN} & \multicolumn{1}{l}{MHGNN} & \multicolumn{1}{l}{HGT} & \multicolumn{1}{l}{HetFS} \\
    \midrule
    \multirow{4}[0]{*}{DBLP} & NMI   & 83.21  & 88.6  & 87.55 & 90.33 & 90.21 & 89.6  & 88.07 & \textbf{91.22} & 90.77 \\
          & ARI   & 85.29  & 91.34  & 89.49 & 90.89 & 91.3  & \textbf{93.1}  & 91.26 & 92.13 & 91.46 \\
          & macro-f1  & 86.18  & 90.45  & 89.88 & 91.67 & \textbf{92.16} & 91.74 & 91.84 & 92.1  & 92.3 \\
          & micro-f1  & 86.64  & 91.58  & 90.49 & 92.05 & 92.24 & 92.53 & \textbf{92.97} & 92.78 & 92.54 \\
    \midrule
    \multirow{4}[0]{*}{IMDB} & NMI    & 53.23  & 54.09  & 57.31 & 57.04 & 56.32 & 56.19 & 56.44 & \textbf{57.89} & 57.12 \\
          & ARI    & 55.06  & 55.34  & 59.2  & 62.77 & \textbf{63.07} & 62.96 & 61.86 & 60.43 & 61.88 \\
          & macro-f1  & 55.21  & 56.77  & 58.85 & 57.74 & 56.49 & 58.28 & 58.01 & 59.12 & \textbf{59.78} \\
          & micro-f1  & 57.76  & 58.4  & 62.05 & 64.63 & 64.67 & 64.3  & 63.94 & \textbf{65.2}  & 63.36 \\
\bottomrule
\end{tabular}
}
\end{table*}

\subsection{Ablation Study}

HetFS is an integrated similarity search method designed to aggregate information at different levels. How does node information impact the total performance? Does semantic information optimize information processing? Is content information effective for enhancing the performance of the path information-based method? We conducted ablation studies to evaluate the performance of various components to answer the above questions (RQ3), including (1) - node, removing node centrality from the method and treating all nodes as identical; (2) - semantics, excluding edge contributions from the method, and treating all edge weights as identical; (3) - content, removing content information from the method and relying solely on path information in the HIN for similarity search. The results of node clustering and node classification on the IMDB dataset are shown in Fig.~\ref{fig:ablation_study}.

\begin{figure}[h]
    \centering
        \includegraphics[width=\textwidth]{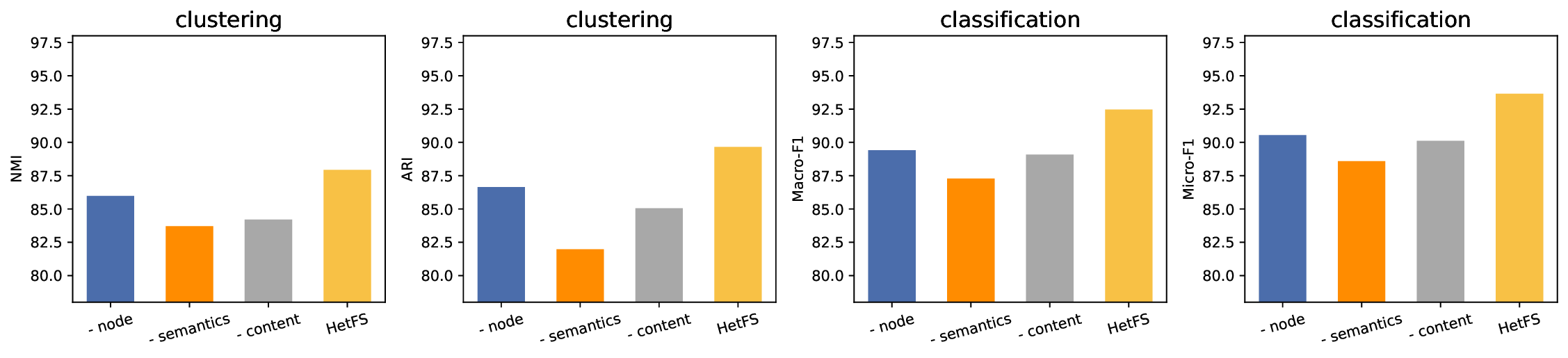}
    \caption{Performances without various components on ``IMDB" datasets.} \label{fig:ablation_study}
\end{figure}

According to Fig.~\ref{fig:ablation_study}, we can observe that: (1) HetFS performs better than all partial methods with different components removed, suggesting that node information, semantics information, and content information all contribute to the aggregation of heterogeneous information to varying extents. (2) Among all the different components, the removal of semantics information has the most pronounced impact, indicating that content provides more effective information to the method compared to other components.

All the experiments above demonstrated that a combination of path information and content information was effective and efficient in mining graph data. Ad-hoc query experiments revealed that the results obtained by searching with different meta-paths focused on different content aspects. HetFS not only delved deeply into graph data to provide meaningful results but also swiftly adapted to user-given meta-paths for answering ad-hoc queries. Downstream applications further demonstrated that, while ensuring rapid responses in on-the-fly queries, HetFS delivered results comparable to state-of-the-art methods.

\section{Conclusion}\label{sec13}

This paper presented HetFS, an efficient and effective similarity measure designed to address four limitations in existing solutions for HINs: (1) high computational overhead on ad-hoc queries with user-given meta-paths, (2) disregarding node centrality, (3) omission of edge contribution, and (4) neglecting node content features. Specifically, HetFS leveraged content, node, edge, and structural information to aggregate data from HINs. Content information, integrated into path information later, underwent transformation via type-specific functions to unify it into a latent space. Path information incorporated node centrality and edge contribution with structural information, utilizing an iterative algorithm on heterogeneous graphs. The culmination of all these elements formed the ultimate similarity method. Experiments indicated that HetFS effectively handled ad-hoc queries with swift response times and comprehensive search outcomes, delivering commendable results compared to state-of-the-art methods in tasks such as link prediction, node clustering, and node classification. Future work involves adapting this heterogeneous graph mining framework to mine complex relation similarity from HINs.

\bmhead{Acknowledgments}

This work was mainly supported by the National Natural Science Foundation of China (NSFC No. 61732004).

\bmhead{Funding}

This work was mainly supported by the National Natural Science Foundation of China (NSFC No. 61732004).









\section*{Statements and Declarations}

\begin{itemize}
\item This work was mainly supported by the National Natural Science Foundation of China (NSFC No. 61732004).
\item The authors declared no potential conﬂicts of interest with respect to the research, authorship, and/or publication of this article.
\item Ethics approval not applicable.
\item Some data used in this study were obtained from the publicly accessible websites described in Section 5. Other data are available from the corresponding author upon reasonable request.
\item Code is available from the corresponding author upon reasonable request.
\item Xuqi Mao: conceptualization of this study, methodology, software. Zhenyi Chen: conceptualization, methodology, software. Zhenying He: supervision. X. Sean Wang:
conceptualization, methodology.
\end{itemize}

\bibliography{ref}

\end{document}